\newtheorem{lem}{Lemma}
\newtheorem{cor}{Corollary}
\newtheorem{defn}{Definition} 
\newtheorem{rem}{Remark}
\definecolor{customblue}{HTML}{b9dcb6}
\definecolor{customblue2}{HTML}{a8dde1}
\def\eqref#1{equation~\ref{#1}}
\def\1{\bm{1}}
\DeclareMathAlphabet{\mathsfit}{\encodingdefault}{\sfdefault}{m}{sl}
\SetMathAlphabet{\mathsfit}{bold}{\encodingdefault}{\sfdefault}{bx}{n}
\title{Graph Pseudotime Analysis and Neural Stochastic Differential Equations for Analyzing Retinal Degeneration Dynamics and Beyond}
\author{%
  Dai Shi,\thanks{Equal contribution.  \faEnvelope \ \texttt{dai.shi@sydney.edu.au}.\ To QuanQuan \faCat, a gift in solemn tribute, for thy love unyielding and thy company unfaltering; in memory’s hallowed halls, thou shalt dwell forevermore.}  \
  \ Kuan Yan,$^*$  
  \ Lequan Lin,  
  \ Yue Zeng,  
  \ Ting Zhang,  \\~\\
  \textbf{Dmytro Matsypura,  
  \ Mark C. Gillies,  
  \ Ling Zhu\thanks{Corresponding author: \texttt{ling.zhu@sydney.edu.au}.},  
  \ and Junbin Gao\thanks{Corresponding author: \texttt{junbin.gao@sydney.edu.au}.}} \\~\\
  \textit{University of Sydney, Australia}
}
\date{}
\begin{document}

\maketitle

\begin{abstract}
Understanding disease progression at the molecular pathway level usually requires capturing both structural dependencies between pathways and the temporal dynamics of disease evolution. In this work, we solve the former challenge by developing a biologically informed graph-forming method to efficiently construct pathway graphs for subjects from our newly curated JR5558 mouse transcriptomics dataset. We then develop Graph-level Pseudotime Analysis (GPA) to infer graph-level trajectories that reveal how disease progresses at the population level, rather than in individual subjects. Based on the trajectories estimated by GPA, we identify the most sensitive pathways that drive disease stage transitions. In addition, we measure changes in pathway features using neural stochastic differential equations (SDEs), which enables us to formally define and compute pathway stability and disease bifurcation points (points of no return), two fundamental problems in disease progression research. We further extend our theory to the case when pathways can interact with each other, enabling a more comprehensive and multi-faceted characterization of disease phenotypes. The comprehensive experimental results demonstrate the effectiveness of our framework in reconstructing the dynamics of the pathway, identifying critical transitions, and providing novel insights into the mechanistic understanding of disease evolution.


\end{abstract}

\section{Introduction}
Understanding disease progression through the lens of molecular pathways is crucial for identifying critical transitions and potential intervention targets \citep{liu2013pathway}. Although recent advances in time-series forecasting \citep{tripto2020evaluation}, differential equations \citep{xie2024rna}, and dynamical simulations \citep{xu2024equivariant, satorras2021n} have significantly improved our ability to quantify disease evolution, their applicability remains constrained when relevant biological signals cannot be consistently measured within the same individual. For example, single-cell transcriptomic profiling provides crucial insights into disease states, but requires cell lysis, making longitudinal observations within the same subject infeasible \citep{chen2022live}. This limitation presents a fundamental challenge in modeling the path-level disease dynamics and tasks such as identifying the key path-sensitive pathways (SP), the stability of the pathway \citep{khasminskii2012stochastic}, and the estimation of the bifurcation of the disease \citep{flores2023bifurcation}. In this work, we solve these challenges by developing Graph Pseudotime Analysis (GPA), which extends the traditional pseudotime analysis \citep{qiu2017single} to the object's pathway graph profiles formed by our biology-informed graph-forming algorithm. The GPA estimated trajectories among graphs offer us insights into identifying the SPs that are likely to induce the changes of disease stages. Furthermore, we model these trajectories using neural partial stochastic differential equations (SDEs) \citep{li2020scalable}, which provide a principled approach for analyzing pathway stability and detecting disease bifurcation points. Our comprehensive analysis and experimental results establish a novel paradigm for studying the genomic dynamics of pathway profiles, offering new insights into disease progression and potential therapeutic interventions. We summarize our contributions as follows. 

\begin{itemize}
    \item We provide a novel mouse dataset (namely JR5558) that integrates transcriptomics data for studying pathway-level disease dynamics. Furthermore, we form our research objects molecular pathway graphs based in a biology-informed manner. We then identify the SPs from our data by the graph regression techniques. 

    \item We develop GPA to uncover the orders between graphs, and we apply temporal GNN models to show the SPs that drive the stage change of the disease. 

   \item We further analyze the estimated pathway trajectories using SDEs, rigorously defining pathway-level dynamic stability and establishing conditions for disease bifurcation. By leveraging our learned SDEs, we precisely identify the timing of disease bifurcations and assess the stability of pathways. Furthermore, we extend our SDE framework to incorporate interactions of pathway, enabling a more comprehensive and multi-faceted characterization of disease phenotypes (Appendix \ref{append:sde}).

    \item We conduct comprehensive numerical experiments based on our developed approaches above and provide biological interpretations of our findings and suggestions for future studies. 
\end{itemize}

\begin{figure}[t]
\centering
\includegraphics[width=1\textwidth]{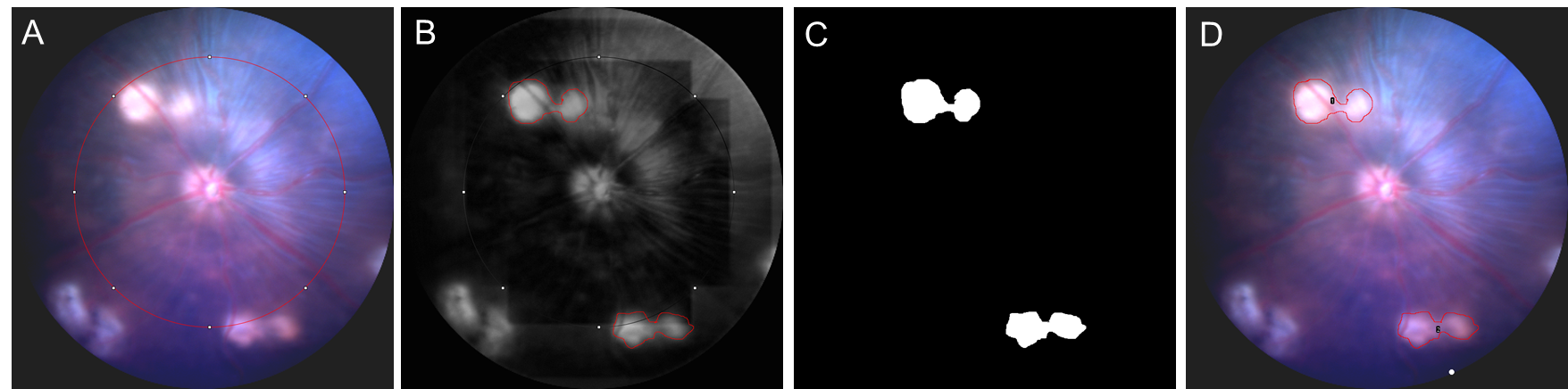} 
\caption{Illustration of the key steps of the quantification method. 
    (A) A circular region of interest (ROI) with a 283$\mu m$ radius centered on the optic nerve head was defined. (B) The image was converted to 8-bit grayscale, the background was removed using a 200$\mu m$ rolling ball algorithm, and lesions within the ROI were manually outlined using the freehand selection tool. (C) A threshold was applied to accurately segment the lesion areas. (D) The selected lesions were verified in the original color fundus image.}
\label{fig:02_fundus}
\end{figure}

\section{Preliminaries}\label{sec:preliminaries}
\paragraph{Notations} We leave the summary of the related works in Appendix \ref{append:related_works} but introduce the necessary notation as follows. Throughout this paper, we let $\mathcal G (\mathcal V, \mathcal E, \mathbf W)$ be the weighted graph with $\mathcal V$ and $\mathcal E$ be the sets of nodes and edges, and $\mathbf W \in \mathbb R^{|\mathcal V| \times |\mathcal V|}$ be the matrix contains all the edge weights with $w_{i,j} > 0$ if $i \sim j$. We also denote $\mathbf A$ and $\mathbf L \in \mathbb R^{N\times N}$ be the adjacency and Laplacian matrices of the graph, respectively, where $N$ is the number of nodes. Furthermore, we let $\{ (\lambda_i, \mathbf u_i) \}_{i=1}^N$ be the set of eigen-pairs of ${\mathbf L}$ where $\mathbf u_i$ are the row vectors of $\mathbf U$ obtained from the eigendecomposition of $\mathbf L = \mathbf U \boldsymbol{\Lambda} \mathbf U^\top$. Lastly, the feature matrix of the graph is denoted as $\mathbf X \in \mathbb R^{N\times d}$, where $d$ standards for the feature dimension. 

\paragraph{JR5558 Dataset}
We present a concise overview of the JR5558 dataset to establish a clear foundation for the analyses that follow. Specifically, the JR5558 mouse model, obtained from Jackson Laboratory (JAX stock \#005558), is a well-established system for studying neovascular age-related macular degeneration (nAMD). This study utilized bulk RNA-seq data collected from the retinas of 23 eight-week-old male JR5558 mice. RNA extraction was performed using the GenEluteTM Single Cell RNA Purification Kit (Sigma Aldrich, RNB300), with library preparation, quality control, and sequencing conducted by Novogene. 

The RNA-seq dataset contains expression levels for 56,748 genes, normalized as Fragments Per Kilobase of transcript per Million mapped reads (FPKM), of which 24,888 genes with corresponding Entrez IDs were used for pathway analysis. In addition to the RNA-seq data, fundus photographs were taken to quantify subretinal lesion severity. The lesion area was calculated as a percentage of the total retinal area using an automated image analysis pipeline. The key steps include delineating the optic nerve region, adjusting the image threshold, and validating the selected lesion area as illustrated in Figure~\ref{fig:02_fundus}. This methodology ensures consistent and accurate quantification of lesion severity. 


\section{Uncover The Disease Sensitive Genetic Pathways}
In this section, we show in detail how our graph-based methods can efficiently extract the key sensitive pathways (SP) via JR5558. This involves identifying the most sensitive pathways when we treat the disease development as a whole, i.e., there is no temporal (sequential)
relationship between objects. On the other hand, we also aim to uncover those SPs that drive the stage changes of the disease by building a pseudotime among objects' graph profiles. 
Figure.~\ref{fig:graph_forming_sensitivity} provides an illustration of how our model works.

\subsection{Biology-informed Graph Forming}\label{sec:graph_forming}
At the initial stage, we aim to form graphs that represent the pathway profile of each mouse. Specifically, we group genes (total 24,888) into 343 canonical molecular pathways by developing a web scraper extract detailed biological pathway information related to Mus musculus from the KEGG database\footnote{\url{https://www.genome.jp/kegg-bin/show_organism?menu_type=pathway_maps&org=mmu}}. We note that there are 9703 unique genes (out of 24,888) selected for forming genetic pathways, suggesting a large reduction in the complexity of the subsequent computations. The resulting genetic pathways are with different gene lengths, ranging from 2 to 1168. For each mouse in our dataset, we form a pathway graph with 343 nodes, i.e., genetic pathways as nodes\footnote{In the sequel, we will use pathways and nodes interchangeably.}. As the number of genes is different between pathways, and there are 9703 unique genes involved in forming pathways, for the sake of the feature propagation in the subsequent GNN models, we initially set the dimension of the feature as 9703. Node features are directly obtained from the FPKM (see the introduction of JR5558 dataset in Section \ref{sec:preliminaries}) with the rest part (i.e., the part does not match 9703 unique genes) as zeros. Furthermore, we build edges between nodes if gene overlap(s) exists between two nodes, and the edge weight is computed by the Euclidean distance between node features. Accordingly, for each mouse, the node feature matrix is $\mathbf X \in \mathbb R^{343 \times 9703}$, and the graph connectivity is the same for all mice but with different edge weights. Lastly, we formed 23 graphs for these mice based on the aforementioned paradigm. We finally note that it is commonly seen in the recent literature that edge weights are computed via the distance between node features such as protein graphs \citep{xu2024equivariant,wang2023graph} and electrical networks \citep{black2023understanding}.

\begin{figure}[t]
\centering
\includegraphics[width=1\textwidth]{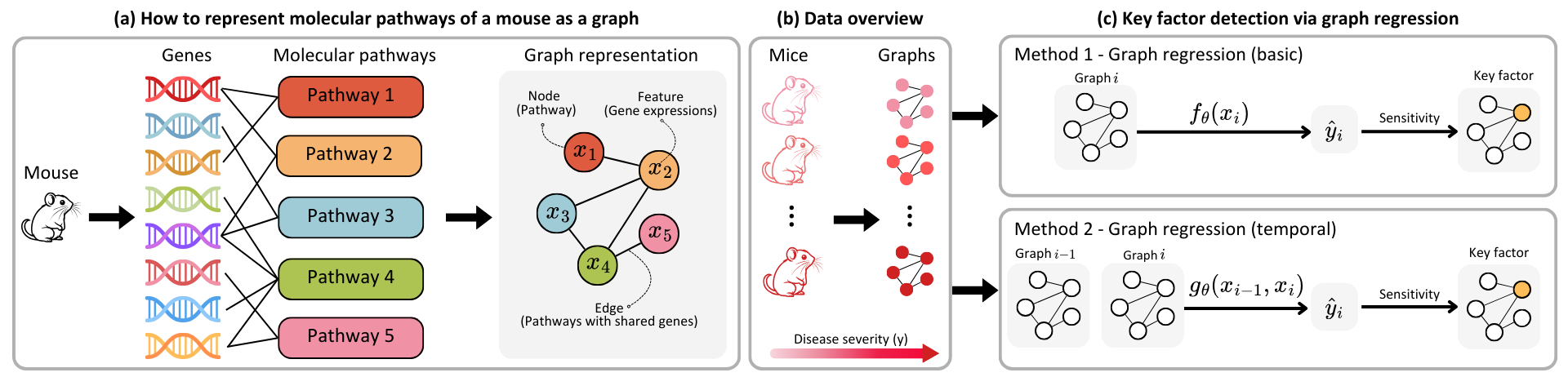} 
\caption{Illustration on how the pathway graphs are formed and how SPs are identified through two types of graph regressions.}
\label{fig:graph_forming_sensitivity}
\end{figure}

\subsection{Graph Regression and Detection of Disease Sensitive Pathways}\label{sec:key_factor_1}
Graph-level regression leverages graph adjacency and node feature information to approximate the target graph labels usually denoted as a continuous variable as $\mathbf y \in \mathbb R^m$, where $m$ is the number of graphs. The process can be formulated as 
\begin{align}
    f_{\boldsymbol{\theta}} (\mathbf A (i), \mathbf X(i)) \approx \mathbf y_i, \quad i \in [1,m],
\end{align}
where we let $f_{\boldsymbol{\theta}}$ be the learned graph regression model, usually containing GNN models with additional pooling and regression layers. Please check the details of the model implementation via Section \ref{sec:experiment} and Appendix \ref{append:experiment}. We note that the learned parameters (i.e., $\boldsymbol{\theta}$) are shared over all graphs. After the model is trained, we compute the model's sensitivity to all nodes. Specifically, for each node $\mathbf x_i$, we compute is sensitivity score as
\begin{align}\label{eq:sensitivity}
    \mathrm{Sensitivity}(\mathbf x_s) = \sum_{i=1}^m\left\|\frac{\partial \widehat {\mathbf y}_i}{\partial \mathbf x_s}\right \|,
\end{align}
where we denote $\widehat {\mathbf y}_i$ as the model prediction based on the input graph $i$ with $\mathbf A(i)$ and $\mathbf X(i)$. As there are $m$ graphs with different adjacency weights, one shall have $m$ number of the Jacobian norms (i.e., $\|\frac{\partial \widehat {\mathbf y}_i}{\partial \mathbf x_s}\|$) and we denote the sum of these Jacobian norms of node $s$ be the model's sensitivity on $s$. We remark that the node Jacobian norm has been widely applied to various GNN research, especially for analyzing the so-called over-squashing problem, in which nodes gradually lose their dependencies after a long-range propagation via GNNs, even though the graph is connected \citep{shi2023exposition}. Furthermore, the sensitivity scores denoted in Equation~(\ref{eq:sensitivity}) can be easily obtained via the Pytorch loss backward gradient once the model is trained, demonstrating the effectiveness of leveraging such definition on finding the SPs 
of the disease. 

\subsection{Discovering Order Amidst Chaos: Graph Pseudotime Analysis}
While our initial graph regression approach efficiently identifies key SPs, practical applications often demand the establishment of graph orders to enable graph-level time series forecasting. More importantly, constructing an ordered sequence of graphs allows for more meaningful clustering analyses, uncovering a biologically significant question: 

\begin{tcolorbox}[colback=cyan!5, breakable]
{\begin{center}
    \textit{What are the pathways driving transitions between distinct stages of the disease?}
\end{center}
    
    }
\end{tcolorbox}
To resolve this problem, we novelly developed graph-level pseudotime analysis (GPA) adopted from the Monocle model and its variants developed in the recent work in \citep{haghverdi2016diffusion,qiu2017single}. We note that our principal purpose is to reveal the temporal orders of pathway graphs rather than estimating trajectories (dynamics) of the single mRNA, which serves as the main objective of the pseudotime methods \citep{prabhakaran2016dirichlet,gut2015trajectories}. Thus, we leave the discussion between different pseudotime methods and their potential extension to graphs as future works.

\paragraph{Graph Dimensionality Reduction via Positional Encoding}
Our GPA starts with the dimensionality reduction of the graphs. First, we conduct the positional encoding to enrich the graph feature information with their spectral characteristics. That is for each node feature matrix $\mathbf X(i)$ from the graph $\mathbf A(i)$, we have 
\begin{align}
    \widehat{\mathbf X}(i) = \mathrm{PE}(\mathbf X(i), \mathbf A(i)) = [\mathbf X(i) \,\, || \,\, \mathbf U(i)],
\end{align}
where $[\cdot || \cdot]$ denotes the concatenation of the vectors, and $\mathbf U(i)$ is the eigenvector matrix of graph $i$. We then conduct mean and maximum pooling operations to ensure the final output vector (denoted as $\widehat{\mathbf x}(i)$) is sufficiently representative in terms of both node feature and graph topology information. That is  
\begin{align}
    \widehat{\mathbf x}(i) = [\mathrm{MaxPool}(\widehat{\mathbf X}(i)) \,\, || \,\, \mathrm{MeanPool}(\widehat{\mathbf X}(i))].
\end{align}
One can check that for each graph $i$, its $\widehat{\mathbf x}(i)$ is with the size $\mathbb R^{N+d}$. Then, their low-dimensional representations are obtained through UMAP \citep{mcinnes2018umap}, that is $\widehat{\mathbf z}(i) = \mathrm{UMAP}(\widehat{\mathbf x}(i))$. In practice, we usually let the dimension of $\widehat{\mathbf z}$ to be 2, and to sufficiently leverage the disease severity information \citep{shi2024design}, we further concatenate each $\widehat{\mathbf z} (i)$ with its $\mathbf y_i$, i.e., $\mathbf z(i) = [\widehat{\mathbf z}(i) || \mathbf y_i]$. As a result, we obtained three-dimensional representations of all the graphs as shown in Figure~\ref{fig:clustering}. We note that although there are many graph-level dimensionality reduction methods, they are often trainable and designed by fitting various downstream tasks, such as link prediction \citep{kipf2016variational} and causality analysis \citep{ng2019graph}. As our goal here is to efficiently obtain graph representations, we select those widely applied unsupervised methods (i.e., UMAP) as our tool and leave the discussion on obtaining the optimized graph representation in future works. 

\begin{figure}[t]
    \centering
    \scalebox{1}{
    \begin{subfigure}{0.32\linewidth}  
        \centering
        \includegraphics[width=\linewidth]{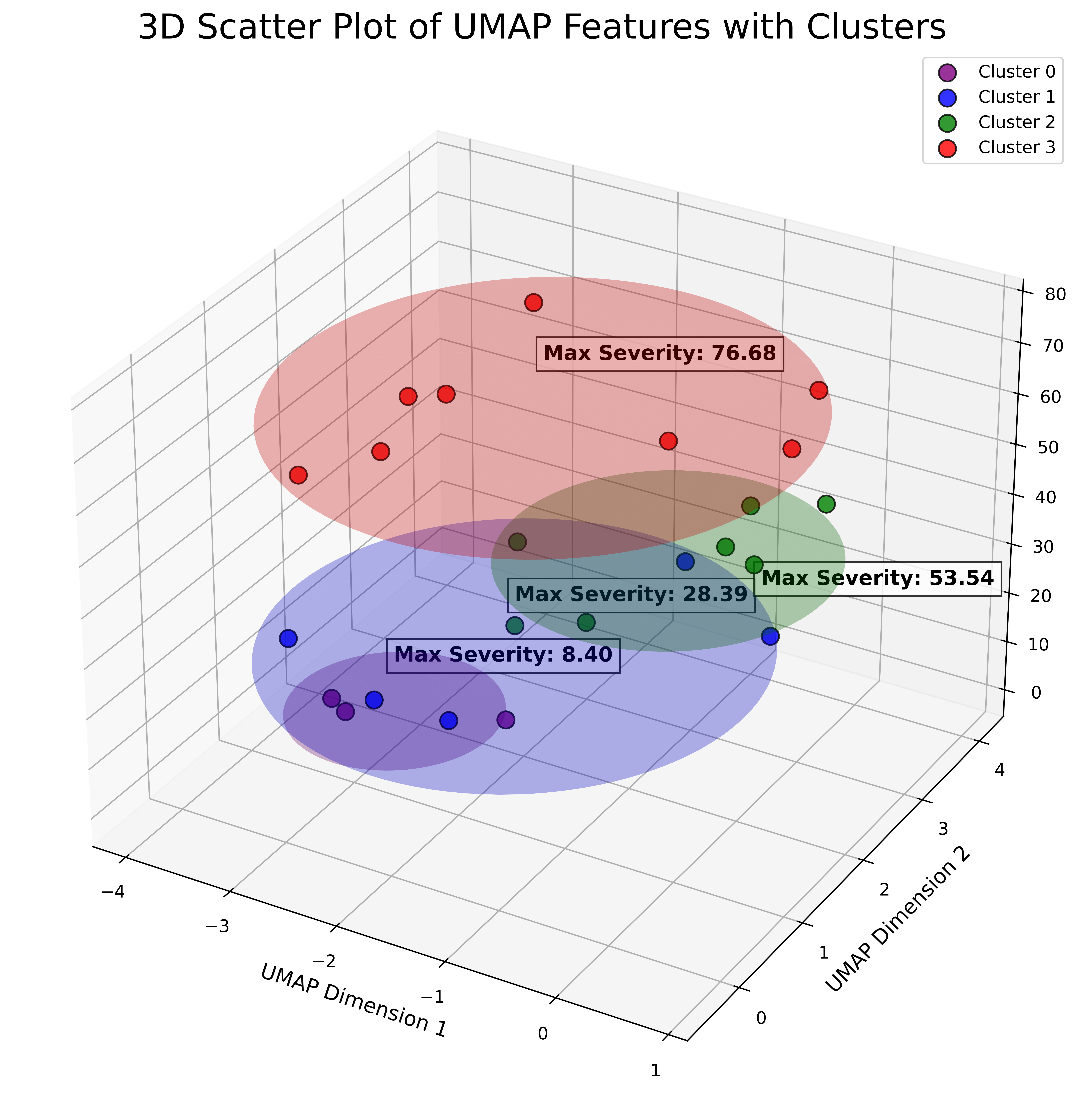}
        \caption{}
        \label{fig:clustering}
    \end{subfigure}
    \begin{subfigure}{0.32\linewidth}
        \centering
        \includegraphics[width=\linewidth]{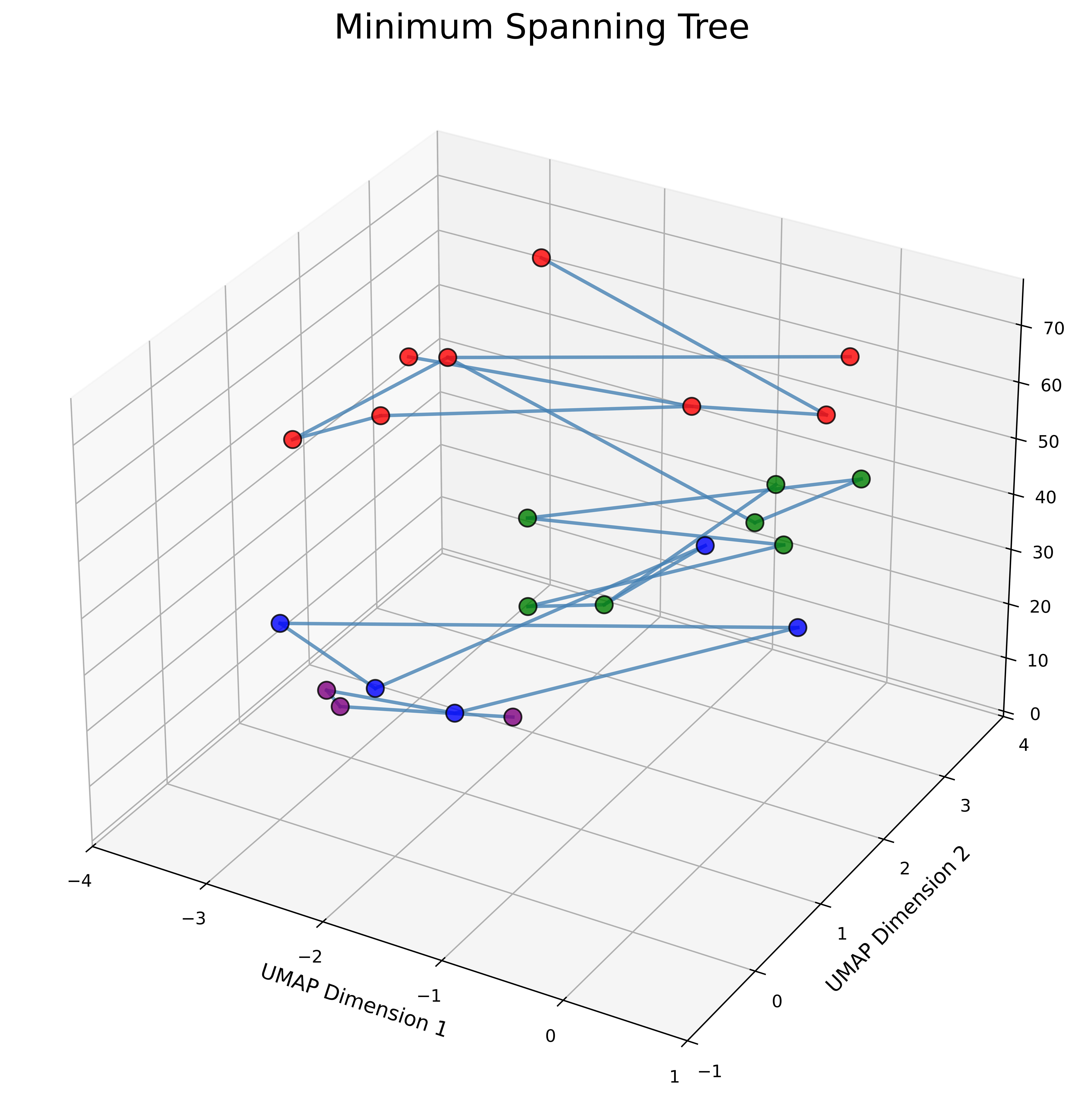}
        \caption{}
        \label{fig:mst}
    \end{subfigure}
    \begin{subfigure}{0.32\linewidth}  
        \centering
        \includegraphics[width=\linewidth]{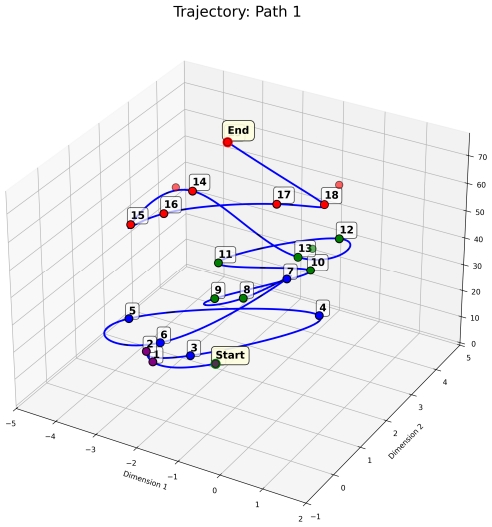}
        \caption{}
        \label{fig:estimated_trajectories}
    \end{subfigure}
    }
    \caption{Process of generating graph trajectory via GPA. (a): Graph dimensionality reduction and clustering; (b): Minimum spanning tree; (c): Trajectory estimation using shortest distance method.}
    \label{fig:trajectory_est}
\end{figure}

\paragraph{Minimum Spanning Tree and Trajectory Estimation}
After obtaining the low-dimensional representations of the graphs, we aim to establish an intrinsic ordering among them. To achieve this, we first construct a K-nearest neighbors (KNN) graph over the embedded representations $\widehat{\mathbf z}$, capturing local similarities in the latent space. We then apply a Minimum Spanning Tree (MST) to refine this structure, ensuring global connectivity while minimizing the total edge weight. This process effectively restructures the KNN graph into a hierarchical framework that preserves essential topological relationships. Biologically, this formulation shifts the focus from \textit{modeling disease progression at an individual level to capturing its systemic evolution across an entire species}, providing a more holistic perspective on disease dynamics. Finally, by fixing pairs of starting and ending objects, e.g., objects with the lowest/highest disease severity, the disease evolution trajectory can be obtained by leveraging the shortest path distance algorithm, which provides an estimate in which every node on the path will travel to the next node with shortest distance (e.g., edge weights). We note that given different starting and ending points for one dataset, the estimated trajectory can be many, suggesting a transaction between distributions, i.e., from the collection (distribution) of all starting points to the ending points. Figure~\ref{fig:trajectory_est} illustrates our trajectory estimation process.

\subsection{Temporal GCN and Transitions Between Disease Stages}\label{sec:key_factor_2}
The estimated trajectory from GPA offers us a chance to measure the evolution of the disease via a time-series way. To sufficiently capture the relationship between the severity of disease and graph spatial (from the node connectivity) temporal (graph time series) features, we deploy recent works on temporal graph neural networks, e.g., Temporal Graph Convolutional
Network (TGCN) \citep{zhao2019t}. Temporal GNNs such as TGCN, TGAT \citep{xu2020inductive} and their successors, the so-called spatial-temporal GNNs (STGNNs) such as STGCN \citep{yu2017spatio} and DCRNN \citep{li2017diffusion} have been widely applied to many forecasting fields such as traffic flow/speed and time series. Specifically, assuming the time delay (i.e., lag) as 1, then our task is to find 
\begin{align}
    \widehat{\mathbf y}(t) = {g}_{\boldsymbol{\theta}}(\mathbf A(t), \mathbf X(t), \mathbf A(t-1), \mathbf X(t-1)),
\end{align}
where we let $g_{\boldsymbol{\theta}}$ be any temporal GCN model that approximates the ground truth disease severity $\mathbf y(t)$ at time $t$ using $\widehat{\mathbf y}(t)$ by leveraging time-dependent adjacency and feature information (i.e., $\mathbf A(t), \mathbf A(t-1), \mathbf X(t), \mathbf X(t-1)$). We note that here we assign the GPA estimated sequences of the graph as the index denote as $(t)$ (e.g., $\mathbf X(t)$) instead of keep using $i$ (e.g., $\mathbf X(i)$) which standards of graph $i$ without sequence (same as $\mathbf y_i$ to $\mathbf y(t)$). Adopted from the TGCN, we start with two layers of GCN \citep{kipf2016semi} followed by the so-called graph recurrent unit (GRU) and the model output (i.e., $\widehat{\mathbf y}$) is obtained by a single layer MLP. That is
\begin{align}\label{eq:tgcn}
    &\mathbf H(t) = \mathbf A(t) \mathbf X(t) \mathbf W, \quad
    \widetilde{\mathbf H}(t) = \mathrm{GRU} (\mathbf H(t)), \quad \widehat {\mathbf y}(t) = \mathrm{MLP}(\widetilde{\mathbf H}(t)),
\end{align}
in which we let $\mathbf H$ be the hidden state of the graph nodes and $\mathbf W$ learnable weights in GCN. Different from the original TGCN, where all graphs have the same structure, here, since we let the distance between nodes as the edge weights, we dynamically input $\mathbf A$ in our revised TGCN, and the model is trained via the MSE loss between $\mathbf y(t)$ and $\widehat{\mathbf y}(t)$.

\paragraph{Clustering and Transaction Between Disease Stages}
In order to segment disease into different stages, we provide a Gaussian Mixture Model (GMM) to the low-dimensional representations of the graphs (i.e., $\mathbf z$), as illustrated by Figure.~\ref{fig:clustering}, in which clusters are highlighted with different colors. We define the graph pairs $(\mathcal G(t), \mathcal G(t+1))$ as the transaction pairs between disease stages if $\mathcal G(t) \in \mathcal S_1$ and $\mathcal G(t+1) \in \mathcal S_2$, where $\mathcal S_1$ and $\mathcal S_2$ are the sets (clusters) that contain all the graphs in stage one and two, estimated by GMM. For example, we will have 3 pairs of graphs for four stages of disease in Figure.~\ref{fig:clustering}. To identify the most influential node that drives the changes of the disease stages, one can simply compute $\left \|\frac{\partial \widehat{\mathbf y}(t)}{\partial \mathbf x_s(t)}\right \|$ similar to Equation~(\ref{eq:sensitivity}). The difference is in this case, one only needs to select all \textbf{the first graph in each stage}, as based on the formulation of TGCN above, the disease severity (i.e., $\widehat{\mathbf y}(t)$) of that graph depends on the adjacency and feature information from both this graph and its previous step. Accordingly, computing $\left \|\frac{\partial \widehat{\mathbf y}(t)}{\partial \mathbf x_s(t)}\right \|$ directly illustrates how sensitive the current disease severity to the previous graph features. 

\section{Neural Stochastic Differential Equations and Other Downstream Tasks}\label{sec:nsde}
With the advance of leveraging GPA to provide pseudotime orders to the graph, we are now interested in analyzing the feature-changing dynamics along with the estimated trajectories. We note that given different starting and ending point settings from the original JR5558 dataset, we obtained three trajectories. Based on our inductive assumption that the transaction between graph pathway features (i.e., $\mathbf x$) is governed by stochastic differential equations (SDE) and can be approximated by the neural SDE method such that
\begin{align}\label{eq:sde}
    \frac{\partial \mathbf x(t)}{\partial t} = \psi_{\boldsymbol{\theta}} (\mathbf x(t), t) dt + \xi_{\boldsymbol{\phi}}(\mathbf x(t), t)d\mathbf B(t), 
\end{align}
where we let $\psi_{\boldsymbol{\theta}}$ and $ \xi_{\boldsymbol{\phi}}$ be drifting and diffusion terms which measure the deterministic and stochastic changes of the node features, approximated by two neural networks which parametrized by $\boldsymbol{\theta}$ and $\boldsymbol{\phi}$, respectively, and $d\mathbf B$ is the standard Wiener process. Accordingly, each node $\mathbf x$ owns 23 (number of graphs) steps to reach its destination, and based on the graph forming paradigm, we have 343 nodes in each graph.  Then we train a model to learn $\psi_{\boldsymbol{\theta}}$ and $ \xi_{\boldsymbol{\phi}}$ via torch SDE package \citep{li2020scalable}. 

\paragraph{Downstream Task One: Pathway Stability}
The construction of neural SDE will enable us to investigate the pathway stability (PS). Specifically, we are interested in the following: 
\begin{tcolorbox}[colback=cyan!5, breakable]
{\begin{center}\textit{ Which pathways remain stable throughout disease progression, and which pathways exhibit instability due to external perturbations?}
\end{center}
}
\end{tcolorbox}
Understanding this question is fundamental to uncovering the mechanisms underlying disease progression. Stable pathways often represent core regulatory circuits that drive disease dynamics, while unstable pathways are highly responsive to external stimuli and may serve as key intervention targets. Identifying these pathways can provide critical insights into therapeutic strategies, particularly for diseases characterized by stochastic transitions \citep{gupta2011stochastic,panegyres2022stochasticity}. Numerically, we quantify the PS by analyzing their diffusion terms time variances, that is
\begin{align}\label{eq:ps}
    \mathrm{PS} (\mathbf x) = \frac1T \sum_{t=1}^T\|\xi_{\phi}(\mathbf{x}(t+1), t+1) - \xi_{\phi}(\mathbf{x}(t), t)\|^2,
\end{align}
where $T = 23$ is the total step of the pathway trajectory. The formulation of PS follows directly from classic stochastic stability theory \citep{khasminskii2012stochastic}, where the diffusion term's behavior dictates long-term stability. As shown in the following Lemma. 

\begin{lem}[informal]\label{lem:stability}
    Let $\mathbf x(t) \in \mathbb R^d$ be a stochastic process governed by the Itô stochastic differential equation (SDE) given in Equation~(\ref{eq:sde}). Assuming
    the diffusion term \( \xi_{\phi}(\mathbf{x}(t), t) \) is continuous and differentiable and has bounded partial derivatives, 
    then under mild conditions, the mean-square stability of the system satisfies if $\exists C > 0, \sup_{t \geq 0} \mathbb{E} \|\xi_{\phi}(\mathbf{x}(t), t)\|^2 > C$, i.e., the system exhibits \textbf{mean-square divergence}, thus \textbf{unstable} ($\mathbb E \|\mathbf x(t)\| \rightarrow \infty$). On the other hand, if $\lim_{t \to \infty} \mathbb{E} \|\xi_{\phi}(\mathbf{x}(t), t)\|^2 \to 0$, then the system is \textbf{mean-square asymptotically stable}, i.e., $\mathbb E \|\mathbf x(t)\|^2$ is bounded. 
    \end{lem}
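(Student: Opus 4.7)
The plan is to apply It\^o's formula to the Lyapunov-type functional $V(\mathbf{x}(t)) = \|\mathbf{x}(t)\|^2$ and then take expectations to turn the stochastic equation into a deterministic ODE for $m(t) := \mathbb{E}\|\mathbf{x}(t)\|^2$. The ``mild conditions'' I would take to mean: (i) the usual linear-growth and Lipschitz conditions on $\psi_{\boldsymbol\theta}$ and $\xi_{\boldsymbol\phi}$ guaranteeing existence, uniqueness, and square-integrability of $\mathbf{x}(t)$, so that the local martingale part has zero mean; and (ii) a one-sided growth (dissipativity) bound on the drift of the form $\mathbf{x}^\top \psi_{\boldsymbol\theta}(\mathbf{x},t) \le \alpha\|\mathbf{x}\|^2 + \beta$ for some constants $\alpha,\beta$. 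These hypotheses are standard in Khasminskii-style stochastic stability results and fit naturally with the boundedness of partial derivatives already assumed on $\xi_{\boldsymbol\phi}$.

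First I would apply It\^o's formula to $\|\mathbf{x}(t)\|^2$, yielding
\begin{equation*}
d\|\mathbf{x}(t)\|^2 = \bigl(2\mathbf{x}(t)^\top \psi_{\boldsymbol\theta}(\mathbf{x}(t),t) + \|\xi_{\boldsymbol\phi}(\mathbf{x}(t),t)\|_F^2\bigr)\,dt + 2\mathbf{x}(t)^\top \xi_{\boldsymbol\phi}(\mathbf{x}(t),t)\,d\mathbf{B}(t).
\end{equation*}
Taking expectation and using that the It\^o integral is a true martingale (by the square-integrability of $\mathbf{x}$ and the growth bound on $\xi_{\boldsymbol\phi}$), I obtain
\begin{equation*}
\frac{d}{dt} m(t) = 2\,\mathbb{E}\bigl[\mathbf{x}(t)^\top \psi_{\boldsymbol\theta}(\mathbf{x}(t),t)\bigr] + \mathbb{E}\|\xi_{\boldsymbol\phi}(\mathbf{x}(t),t)\|_F^2.
\end{equation*}
The whole argument then reduces to a comparison principle applied to this scalar ODE.

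Next I would handle the two regimes separately. For the divergence statement, the hypothesis $\sup_{t\ge 0}\mathbb{E}\|\xi_{\boldsymbol\phi}(\mathbf{x}(t),t)\|^2 > C$ combined with the continuity of $t\mapsto \mathbb{E}\|\xi_{\boldsymbol\phi}\|^2$ (inherited from the regularity of $\xi_{\boldsymbol\phi}$) implies that the diffusion term stays bounded below by a positive constant on a set of times of infinite Lebesgue measure; coupling this with the dissipativity bound on the drift and Gr\"onwall's inequality applied from below, I would integrate to conclude $m(t)\to\infty$, which in turn gives $\mathbb{E}\|\mathbf{x}(t)\|\to\infty$ by Jensen. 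For the asymptotic-stability statement, I would use the assumed decay $\mathbb{E}\|\xi_{\boldsymbol\phi}(\mathbf{x}(t),t)\|^2 \to 0$ together with a negative-definite drift condition (absorbed into ``mild conditions'') to show that $m(t)$ satisfies a differential inequality of the form $\dot m(t) \le -2\gamma\, m(t) + \eta(t)$ with $\gamma>0$ and $\eta(t)\to 0$; standard integration via the variation-of-constants formula then yields $\sup_t m(t) < \infty$ and indeed $m(t)\to 0$.

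The main obstacle, in my view, is not any individual calculation but the vagueness of the ``mild conditions.'' Without a sign condition on $\mathbf{x}^\top \psi_{\boldsymbol\theta}$ the diffusion term alone does not determine stability---one can easily construct counterexamples where the drift either rescues an exploding diffusion or destroys a vanishing one. So the real work is to state precisely the one-sided Lipschitz / dissipativity condition that couples the drift to the diffusion in the right way, and to verify that the learned networks $\psi_{\boldsymbol\theta}, \xi_{\boldsymbol\phi}$ used in the numerical experiments satisfy it (for instance via weight-norm control or a tanh output layer). The remaining technical step---justifying that the stochastic integral is a genuine martingale rather than only a local martingale, so that expectation commutes with the It\^o term---is routine under the standing linear-growth assumption and a localization argument with stopping times $\tau_n = \inf\{t : \|\mathbf{x}(t)\|\ge n\}$ followed by $n\to\infty$.
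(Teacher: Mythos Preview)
Your Lyapunov approach via It\^o's formula on $V(\mathbf{x})=\|\mathbf{x}\|^2$ is correct and is in fact the textbook route to mean-square stability, but it is \emph{not} the argument the paper gives. The paper's full version of the lemma makes the ``mild conditions'' precise in a different way: rather than a one-sided dissipativity bound on the drift, it assumes the drift and diffusion are of comparable scale, namely
\[
C_1 \le \frac{\|\psi_{\boldsymbol\theta}(\mathbf{x},t)\|^2}{\mathrm{Tr}\bigl(\xi_{\boldsymbol\phi}(\mathbf{x},t)\xi_{\boldsymbol\phi}^\top(\mathbf{x},t)\bigr)} \le C_2 \quad \forall\,\mathbf{x},t,
\]
and then discretizes the SDE to compute the expected squared \emph{increment} $\mathbb{E}\|\mathbf{x}(t+\Delta t)-\mathbf{x}(t)\|^2$. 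The point of that computation is that the drift contributes at order $\Delta t^2$ while the diffusion contributes at order $\Delta t$, so for small steps the diffusion term dominates; the ratio bound then lets the paper conclude that ``one only needs to investigate the diffusion term,'' after which it simply cites Khasminskii (Theorem~6.13) for the dichotomy.

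What each approach buys: yours is self-contained and directly tracks the quantity of interest $m(t)=\mathbb{E}\|\mathbf{x}(t)\|^2$, but it requires you to impose a sign condition on $\mathbf{x}^\top\psi_{\boldsymbol\theta}$ that the paper never states and that may not hold for a learned network. The paper's route avoids any sign hypothesis on the drift by instead tying its magnitude to that of the diffusion, which is arguably a more natural constraint on jointly trained $(\psi_{\boldsymbol\theta},\xi_{\boldsymbol\phi})$; the price is that the actual stability conclusion is outsourced to Khasminskii rather than derived via Gr\"onwall, and the connection between the local increment analysis and the global behaviour of $\mathbb{E}\|\mathbf{x}(t)\|^2$ is left implicit. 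Your closing caveat---that without a drift condition the diffusion alone cannot decide stability---is exactly right, and the paper's ratio assumption is its answer to that objection, just a different one from the dissipativity condition you proposed.
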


Full version of Lemma \ref{lem:stability} is in Appendix \ref{append:proof}. As shown in Lemma~\ref{lem:stability}, if the expected magnitude of $\xi_{\phi}(\mathbf{x}(t), t)$ remains bounded, the system is mean-square stable; otherwise, it diverges. Thus, PS serves as an empirical metric, capturing instability through the variance of $\xi_{\phi}(\mathbf{x}(t), t)$.
We show the proof in Appendix \ref{append:proof}, where we adapted the proof from the work in \citep{khasminskii2012stochastic} and the empirical results via Section \ref{sec:experiment}.

\paragraph{Downstream Task Two: Disease Bifurcation Points}
Apart from the stability analysis of the estimated SDE, we are also interested in determining whether a \textit{bifurcation point} (BP) aka. \textit{point of no return} exists in the progression of disease at the pathway level, as this potentially leads to irreversible disease progression \citep{flores2023bifurcation}. Specifically, we present the following definition. 
\begin{defn}[Point of No Return]\label{defn:BP}
    Let $\mathbf x(t) \in \mathbb R^d$ be a stochastic process governed by the following Itô stochastic differential equation (SDE) given in Equation~(\ref{eq:sde}). Define the system stability potential function as 
    \begin{align}
        J(\mathbf x, t) = - \int \psi_{\boldsymbol{\theta}} (\mathbf x, t) d\mathbf x 
    \end{align}
   We define the point of no return (bifurcation time) as a time $t^*$
  at which the pathway undergoes an irreversible transition in its dynamical behavior. Specifically, it is the first time at which at least one of the following conditions holds: (1): \textit{Loss of Stability Region}, that is $\nabla J(\mathbf x, t^*) \approx 0$ and $\frac{d}{dt}J(\mathbf x, t^*) >0$; or (2): \textit{Transition to a New Steady State}: $\lim_{t \rightarrow \infty} \mathbb E\|\mathbf x(t) - \mathbf x(t^*)\| > C, C>0$; (3): \textit{Diffusion Variance Explode}: $\sup_{t \geq t^*} \mathrm{Var}(\mathrm{PS}(\mathbf x, t)) > \delta$, where $\delta$ is a predefined threshold.   
\end{defn}
One can find that condition (1) checks whether the system reaches the critical point such that deterministic forces vanish $\nabla J(\mathbf x, t^*) \approx 0$, but the stability is deteriorating as $\frac{d}{dt}J(\mathbf x, t^*) >0$. Similarly, condition (2) suggests the changes of $\mathbf x$ will no longer go back to the velocity at time $t^*$, meaning the pathway undergoes an irreversible transition into a new state. Finally, condition (3) captures a sudden increase in stochastic fluctuations, signifying a transition into a highly unstable regime. Although the quantity of $C$ and $\delta$ vary between datasets, in our implementation, we simply set $C =1$ and $\delta = 0.1$, and we report our results in Section \ref{sec:experiment}. We further note that, in practice, the disease is detected mostly after its starting point (i.e., $t = 0$). Thus, we omit the trivial bifurcation point obviously at $t= 0$. We finally remark that in Appendix \ref{append:sde}, we extend our theory for the case when pathways interact with each other, yielding some additional discoveries for measuring the pathway stability and disease bifurcation points.


\section{Experimental Results}\label{sec:experiment}
In this section, we present the results of the numerical experiment and their biological interpretations. All experiments are conducted via one NVIDIA 4090 GPU with 24GB memory, and the implementation details, as well as additional discussions, are included in Appendix \ref{append:experiment}. Table~\ref{tab:pathways} summarizes our findings. 

\begin{table}[t]
\centering
\renewcommand{\arraystretch}{5} %
\caption{Summary of the findings, including top 5 SPs, SPs for disease stage changes, and top 5 stable and non-stable pathways from the results of pathway stabilities.}
\label{tab:pathways}
\resizebox{1\linewidth}{!}{ %
\begin{tabular}{cccccc}
\toprule
\rowcolor[HTML]{C0C0C0} 
\textbf{\LARGE Tasks}             & \multicolumn{5}{c}{\cellcolor[HTML]{C0C0C0}\textbf{ \LARGE Description of the Top 5 Pathways (Left to Right) }} \\ 
\midrule
\textbf{\LARGE Sensitive Pathways (SP)}    & \Large Caffeine metabolism & \Large Primary immunodeficiency & \Large Small cell lung cancer & \Large Long-term potentiation & \Large Proteoglycans in cancer \\ 
\midrule
\textbf{\LARGE SPs for Stage (0-1)} & \Large Biotin metabolism & \Large Caffeine metabolism & \Large Virion \Large Human immunodeficiency virus & \Large Virion Flavivirus & \Large Virion Adenovirus \\ 
\textbf{\LARGE SPs for Stage (1-2)} & \Large alpha-Linolenic acid metabolism & \Large Caffeine metabolism & \Large Virion Human immunodeficiency virus & \Large Virion Flavivirus & \Large Axon guidance \\ 
\textbf{\LARGE SPs for Stage (2-3)} & \Large Caffeine metabolism & \Large alpha-Linolenic acid metabolism & \Large Virion Human immunodeficiency virus & \Large Biotin metabolism & \Large Virion Adenovirus \\ 
\midrule
\textbf{\LARGE Stable Pathways}            & \Large FoxO signaling pathway & \Large Transcriptional misregulation in cancer & \Large Peroxisome & \Large PPAR signaling pathway & \Large Biosynthesis of unsaturated fatty acids \\ 
\textbf{\LARGE Non-stable Pathways}        & \Large Caffeine metabolism & \Large Virion Human immunodeficiency virus & \Large Virion Adenovirus & \Large Phototransduction & \Large Virion Flavivirus \\ 
\bottomrule
\end{tabular}
}
\end{table}

\subsection{Biology Discussion of the Findings}
While KEGG-defined pathways (Section~\ref{sec:graph_forming}) provided a useful starting point, our analysis revealed unexpected pathway influences, emphasizing the need for recontextualization within the specific tissue environment of the retina. One key pathway cluster, node 79, was initially classified under "Caffeine Metabolism" in KEGG annotations. However, our analysis revealed that its constituent genes primarily function in metabolic detoxification and oxidative stress regulation. These processes form a protective metabolic network that mitigates oxidative damage and toxic accumulation in retinal cells. Genes involved in xenobiotic metabolism and purine breakdown play a critical role in maintaining photoreceptor stability by reducing harmful by-products. Whether these pathways act as primary drivers of retinal degeneration or as secondary adaptive responses remains an open question. However, their high sensitivity to disease progression suggests that targeting metabolic detoxification pathways could be a novel therapeutic strategy.

In addition, another key driver of disease stage transition, from Stage 0 to Stage 1, was classified under KEGG as "Virion Human Immunodeficiency Virus." Rather than viral-related functions, this pathway consists of genes involved in immune regulation, chemokine-mediated signaling, and antigen recognition. Within the retina, an immunoprivileged tissue, this network plays a crucial role in maintaining immune homeostasis. Key molecular players include CCR5 and CXCR4, which regulate T cell infiltration into the retina; the CD209 family, which monitors and responds to molecular threats; and CD4, which coordinates immune tolerance and response balance. If this immune surveillance network is overactivated, the resulting inflammatory response could drive retinal damage, accelerating degeneration. This suggests that modulating immune checkpoint pathways could be an effective approach to delaying retinal fibrosis. These findings underscore the importance of refining pathway annotations to reflect actual biological functions in the retina.

Furthermore, a key outcome of our study was the identification of a bifurcation point at Step 4 of our estimated trajectory, which represents an irreversible shift in the transcriptomic landscape. Notably, \textbf{all major pathways} exhibited their own bifurcation at this step by satisfying our proposed condition (2) (i.e., transition to a new steady state), suggesting a system-wide molecular transition that acts as a "point of no return". This finding suggests that the most effective therapeutic window is during the early stages of retinal degeneration, before this critical transition occurs. These findings are supported by clinical observations, further validating our computational approach.






\section{Limitations of the Study and Future Directions}
Despite our novel modeling methods and results, which serve as the strength of this study, our study has some limitations. Due to dataset constraints, our current model focused on pathway-level dynamics rather than individual gene-level interactions. Future work should refine this approach at the gene level, potentially identifying direct genetic targets for intervention. In addition, in Appendix \ref{append:sde} we show a generalization of our model to measure the situation when pathways can interact with each other. Additionally, pathway annotations in KEGG are often too broad or non-specific for tissue- and disease-specific contexts, highlighting the need for retina-specific pathway ontology development.

Beyond macular degeneration, our approach could be applied to other complex diseases where molecular interactions drive progressive phenotypic changes. Future work should extend this methodology by conducting gene-level analysis within top-ranked pathways to identify direct therapeutic targets, refining pathway interpretations through advanced computational tools, and integrating multi-omics data for a more comprehensive disease model. Our study highlights the potential of computational systems biology in uncovering actionable disease insights, paving the way for precision medicine approaches in retinal disease treatment.

From the machine learning perspective, we shall consider the interactions between both genes and their pathways. Ideally, it is preferable to construct a so-called hierarchical graph with two levels, with a gene graph as level one and a pathway graph as level two. A machine learning model is expected to be built to identify the relationship between the hierarchical graph and disease severity after the collection between different levels of graphs is built. Naturally, this modification will lead to a more complex but higher biological interpretable SDE and the formulations of the downstream tasks, we leave this promising direction as our future work.

\bibliographystyle{plainnat}
\bibliography{ref}

\appendix
\section{Related Works}\label{append:related_works}

\paragraph{Graph Neural Networks and Different Types of Graphs}
GNNs were originally proposed to resolve the challenge of data point dependencies via the traditional convolution neural networks, which, in general, treat every input data point independently of each other \citep{kipf2016semi,defferrard2016convolutional}. By considering the so-called adjacency information stored in the graph, GNNs propagate graph node features by aggregating its neighboring information \citep{wu2020comprehensive}. Such propagation paradigm has made GNNs one of the most successful tools in generating predictions via various types of graphs, such as citation networks \citep{wu2020comprehensive}, social networks \citep{sharma2024survey}, molecules (as well as protein and ligands) \citep{zhang2022graph}, traffic networks \citep{jiang2022graph}, to name a few. These graphs vary from different levels of measurements of the data, e.g., scientific papers in citation networks \citep{yang2016revisiting}, atoms, and amino acids in protein and ligand graphs \citep{wu2018moleculenet,GilmerSchoenholzRiley2017} and nodes are connected with different types of attributes (e.g., citations and chemical bonds). In this work, we provide a \textbf{novel graph dataset} (known as JR5558, more details see Section \ref{sec:preliminaries}) in which graphs are formed by the genetic pathways (as nodes) and pathway similarities (as edges), serving as the profiles of the experimental objects (e.g., mice). In addition, we also label these graphs with mice's lesion severity scores (as a continuous variable) obtained from fundus photographs of these mice, where severity was quantified by measuring subretinal lesion size.We train two different types of GNNs to capture the patterns between genetic pathways and mice lesion severity scores: one that does not consider temporal information, and another designed to account for temporal information from the estimated disease progression in the mice.

\paragraph{Pseudotime Analysis and Stochastic Differential Equations}
Pseudotime analysis (PA) was originally developed in single-cell transcriptomics to reconstruct cell differentiation trajectories from static snapshots of gene expression profiles \citep{trapnell2014dynamics,haghverdi2016diffusion,wei2021dtflow}. Since time-resolved measurements of individual cells are often infeasible, pseudotime methods infer an intrinsic ordering (e.g., trajectories) of cells based on their transcriptional similarities, providing insights into dynamic biological processes. While it is a powerful tool widely applied in biology and medical science, its adoption in the machine learning community has only gained significant attention in recent years. For example, recent work has explored using diffusion models to infer pseudotime from single-cell transcriptomic data \citep{mosspseudotime}. In this work, we apply PA to the embeddings of graphs constructed from the genetic pathways of the mice and analyze the estimated trajectories via neural stochastic differential equations \citep{li2020scalable} (see Section \ref{sec:nsde} for more details), which allow us to further exploit pathway stability and disease bifurcation points. This paradigm paves the path of incorporating advanced machine learning approaches to the exploration of the fundamental problems in complex biological systems.


\section{Proofs}\label{append:proof}

\begin{lem}[Full Verison of Lemma \ref{lem:stability}]
    Let $\mathbf x(t) \in \mathbb R^d$ be a stochastic process governed by the Itô stochastic differential equation (SDE) given in Equation~(\ref{eq:sde}). 
    Assuming there exist constants \( C_1, C_2 > 0 \) such that the ratio between the squared drift term and the trace of the diffusion term in Eq~(\ref{eq:sde}) remains bounded:
\begin{equation}\label{eq:same_scale}
    C_1 \leq \frac{\|\psi_{\theta}(\mathbf{x}, t)\|^2}{\mathrm{Tr}(\xi_{\phi}(\mathbf{x}, t) \xi_{\phi}^\top(\mathbf{x}, t))} \leq C_2, \quad \forall \mathbf{x}, t.
\end{equation}
    Further assuming
    the diffusion term \( \xi_{\phi}(\mathbf{x}(t), t) \) is continuous and differentiable and has bounded partial derivatives, 
    then the mean-square stability of the system satisfies if $\exists C > 0, \sup_{t \geq 0} \mathbb{E} \|\xi_{\phi}(\mathbf{x}(t), t)\|^2 > C$, i.e., the system exhibits \textbf{mean-square divergence}, thus \textbf{unstable} ($\mathbb E \|\mathbf x(t)\| \rightarrow \infty$). On the other hand, if $\lim_{t \to \infty} \mathbb{E} \|\xi_{\phi}(\mathbf{x}(t), t)\|^2 \to 0$, then the system is \textbf{mean-square asymptotically stable}, i.e., $\mathbb E \|\mathbf x(t)\|^2$ is bounded. 
\end{lem}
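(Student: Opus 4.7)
The plan is to study the second moment $m(t) := \mathbb{E}\|\mathbf{x}(t)\|^2$ by applying Itô's formula to the quadratic Lyapunov function $V(\mathbf{x}) = \|\mathbf{x}\|^2$. Using that the bounded partial derivatives of $\xi_\phi$ deliver the linear-growth estimates needed to make the stochastic integral a true martingale, I would first obtain
\begin{equation*}
m'(t) \;=\; 2\,\mathbb{E}\bigl[\mathbf{x}(t)^\top \psi_\theta(\mathbf{x}(t),t)\bigr] \;+\; g(t), \qquad g(t) := \mathbb{E}\bigl[\mathrm{Tr}(\xi_\phi \xi_\phi^\top)\bigr].
\end{equation*}
Both claims of the lemma will then be read off from one-sided estimates of this scalar ODE.

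The key technical step is controlling the sign-indefinite cross term using only the ratio hypothesis~(\ref{eq:same_scale}). Cauchy--Schwarz combined with $\|\psi_\theta\|^2 \leq C_2\,\mathrm{Tr}(\xi_\phi \xi_\phi^\top)$ gives $|\mathbb{E}[\mathbf{x}^\top\psi_\theta]| \leq \sqrt{C_2\,m(t)\,g(t)}$, and the symmetric lower bound coming from $C_1$ gives the analogous reverse estimate. Young's inequality $2\sqrt{ab}\le\epsilon a + b/\epsilon$ then yields a two-sided differential inequality
\begin{equation*}
(1 - C_2/\epsilon)\,g(t) - \epsilon\,m(t) \;\leq\; m'(t) \;\leq\; (1 + C_2/\epsilon)\,g(t) + \epsilon\,m(t)
\end{equation*}
for any $\epsilon>0$, cleanly decoupling $m(t)$ from $g(t)$.

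From here the two conclusions split. For the asymptotically stable branch ($g(t)\to 0$), I would integrate the upper inequality via Gronwall's lemma; since the forcing eventually vanishes, $m(t)$ stays bounded and in fact decays when $g$ is integrable over $[0,\infty)$. For the divergent branch, a persistent lower bound $g(t)\ge C$ combined with a judicious choice of $\epsilon$ forces $m'(t)\ge c_0>0$ eventually, whence $m(t)\to\infty$.

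The main obstacle is that the ratio condition~(\ref{eq:same_scale}) is strictly weaker than the classical Khasminskii dissipativity hypothesis $\mathbf{x}^\top\psi \leq -\alpha\|\mathbf{x}\|^2$: the residual $\epsilon\,m(t)$ term prevents a bare Gronwall argument from producing decay without a quantitative rate on $g$, and the condition $\sup_{t\ge 0}\mathbb{E}\|\xi_\phi\|^2>C$ must be interpreted as a persistent (not instantaneous) lower bound for the divergence side to close. I would therefore state the sharp version as follows: if $g(\cdot)$ decays fast enough to be integrable on $[0,\infty)$ then mean-square asymptotic stability holds, and if $\liminf_{t\to\infty} g(t)>0$ then mean-square divergence holds. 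The bounded-derivative assumption on $\xi_\phi$ enters the proof twice --- once to justify passing expectations through Itô's formula, and once to secure the integrability needed to make the Gronwall step finite.
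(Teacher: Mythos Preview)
Your approach via It\^o's formula on $V(\mathbf{x})=\|\mathbf{x}\|^2$ and the resulting differential inequality for $m(t)=\mathbb{E}\|\mathbf{x}(t)\|^2$ is quite different from the paper's argument. The paper works with a single Euler--Maruyama step, writes $\mathbb{E}\|\mathbf{x}(t+\Delta t)-\mathbf{x}(t)\|^2 = \|\psi_\theta\|^2\,\Delta t^2 + \mathrm{Tr}(\xi_\phi\xi_\phi^\top)\,\Delta t$, observes that the diffusion contribution is $O(\Delta t)$ while the drift is $O(\Delta t^2)$, and then hands the actual stability conclusion off to a citation of Khasminskii (Theorem~6.13). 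It never tracks $m(t)$ globally, never invokes Gronwall, and uses the ratio hypothesis~(\ref{eq:same_scale}) only implicitly to justify that the drift contribution can be neglected at the increment level. Your route is more self-contained and, as you rightly flag, exposes that the stated hypotheses are weaker than what a direct proof needs on the stable side.

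There is, however, a genuine gap on your divergence side. From your lower inequality $m'(t)\ge (1-C_2/\epsilon)\,g(t)-\epsilon\, m(t)$ you cannot conclude $m'(t)\ge c_0>0$ eventually: the $-\epsilon\, m(t)$ term is a damping, so the comparison ODE drives $m(t)$ toward the \emph{bounded} equilibrium $(1-C_2/\epsilon)\,C/\epsilon$, not to infinity. To force divergence you would need a genuine lower bound on the signed cross term $2\,\mathbb{E}[\mathbf{x}^\top\psi_\theta]$, not merely on its absolute value. The lower ratio bound $C_1$ gives $\|\psi_\theta\|^2\ge C_1\,\mathrm{Tr}(\xi_\phi\xi_\phi^\top)$, but this controls only the \emph{magnitude} of $\psi_\theta$ and says nothing about whether it points away from the origin---it could just as well be dissipative. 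Hence the ``symmetric reverse estimate'' you invoke does not close the argument, and the divergence claim remains unproven under the hypotheses you work with.
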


\begin{proof}
Our conclusion is adopted from the work \cite{khasminskii2012stochastic} in which the stability of SDE is analyzed in a more solid manner. We start by discretizing the SDE with a small step \( \Delta t \), we obtain:
\begin{equation}
    \mathbf{x}(t+\Delta t) = \mathbf{x}(t) + \psi_{\theta}(\mathbf{x}(t), t) \Delta t + \xi_{\phi}(\mathbf{x}(t), t) \Delta \mathbf{B}(t).
\end{equation}
Taking the expectation of the squared difference, we get:
\begin{equation}
    \mathbb{E} \|\mathbf{x}(t+\Delta t) - \mathbf{x}(t)\|^2 = \mathbb{E} \|\psi_{\theta}(\mathbf{x}(t), t) \Delta t + \xi_{\phi}(\mathbf{x}_t, t) \Delta \mathbf{B}(t)\|^2.
\end{equation}
By independence of the drift and diffusion terms:
\begin{equation}
     \mathbb{E} \|\mathbf{x}({t+\Delta t}) - \mathbf{x}(t)\|^2= \mathbb{E} \|\psi_{\theta}(\mathbf{x}(t), t) \Delta t\|^2 + \mathbb{E} \|\xi_{\phi}(\mathbf{x}(t), t) \Delta \mathbf{B}(t)\|^2.
\end{equation}
Then, we can denote the following.
\begin{equation}
    \mathbb{E} \|\psi_{\theta}(\mathbf{x}(t), t) \Delta t\|^2 = \|\psi_{\theta}(\mathbf{x}(t), t)\|^2 \Delta t^2, \tag{Drift term contribution} 
\end{equation}
and 
\begin{equation}
    \mathbb{E} \|\xi_{\phi}(\mathbf{x}(t), t) \Delta \mathbf{B}(t)\|^2 = \mathbb{E} \left[ \text{Tr}(\xi_{\phi}(\mathbf{x}(t), t) \xi_{\phi}(\mathbf{x}(t), t)^\top) \right] \Delta t. \tag{Diffusion term contribution} 
\end{equation}

Thus, for sufficiently small \( \Delta t \), the dominant term is:
\begin{equation}
    \mathbb{E} \|\mathbf{x}{(t+\Delta t)} - \mathbf{x}(t)\|^2 \approx \text{Tr}(\xi_{\phi}(\mathbf{x}(t), t) \xi_{\phi}(\mathbf{x}(t), t)^\top) \Delta t.
\end{equation}
This is because for standard Wiener process, i.e., $\mathbf B(t)$ we have $\mathbb \|\Delta \mathbf B(t) \|^2 = \Delta t$, therefore, the rate of the change of the diffusion time is proportional to $\Delta t$. One can also check the rate of change of the drifting term is proportional to $\Delta t^2$. This suggests once the steps of the disease evolution are dense enough, i.e., for any small step $\Delta t$ there is an observation $\mathbf x(t+\Delta t)$. \textbf{One only needs to investigate the diffusion term of the SDE}. Specifically, 
from stochastic stability theory \citep{khasminskii2012stochastic} (Theorem 6.13), if \( \sup_{t \geq 0} \mathbb{E} \|\xi_{\phi}(\mathbf{x}(t), t)\|^2 > C \), then \( \mathbb{E} \|\mathbf{x}(t)\|^2 \to \infty \), implying instability. If \( \lim_{t \to \infty} \mathbb{E} \|\xi_{\phi}(\mathbf{x}(t), t)\|^2 \to 0 \), then \( \mathbb{E} \|\mathbf{x}(t)\|^2 \) remains bounded, implying stability, and this completes the graph. 
\end{proof}

\begin{rem}
Compared to the informal version of Lemma \ref{lem:stability}. The additional assumption is commonly observed in physical and biological systems, such as Langevin dynamics and gene regulatory networks \citep{elowitz2000synthetic}. Additionally, prior works in neural stochastic differential equations \citep{kidger2021neural} enforce similar regularization strategies to maintain numerical stability.
\end{rem}

\section{Experiment Details and Additional Results}\label{append:experiment}

\subsection{Identification of SPs}
In this section, we show details regarding the task of finding the SPs of the disease severity. In addition to the graph forming method, we illustrate in Section \ref{sec:graph_forming}. We note that after the graph node features are constructed, the edge weight of the graph is computed via the distance of the node features from a Gaussian kernel. That is
\begin{align}
    \mathbf W_{i,j} = \mathrm{e}^{(-\|\mathbf x_i - \mathbf x_j\|^2)},
\end{align}
in which we intrinsically let the variance of the Gaussian kernel as 1. 

\paragraph{Disease Sensitivity Without GPA}
In terms of the model training of the first task mentioned in Section \ref{sec:key_factor_1}. We first did the normalization of the graph adjacency matrix that is $\widehat{\mathbf A} = \mathbf D^\frac12 (\mathbf A + \mathbf I) \mathbf D^\frac12$ by also adding the self-loop for each node. We split our dataset with 70\% for training and 20\% for testing. We deploy the GCN model together with global mean pooling with one additional MLP layer to fit the node features to the disease severity. That is
\begin{align}
    \mathbf X (\ell +1 ) = \mathrm{GCN} (\mathbf X(\ell), \widehat{\mathbf A}), \quad \widehat{\mathbf y} = \mathrm{MLP}(\mathrm{MeanPool}(\mathbf X(\ell +1)),
\end{align}
and we let $\ell$ be the number of layers, which is set as 2 in our modeling. For the hyperparameters, we let the hidden dimension of the GCN model as 64 and the dropout ratio as 0.5 with the learning rate as $1\mathrm{e}^{-3}$ and weight decay as $1\mathrm{e}^{-4}$. The model is trained with 10 runs in which every run owns 200 epochs. As we have illustrated in Section \ref{sec:key_factor_1}, the sensitivity of one pathway node $s$ is then computed through $\mathrm{Sensitivity}(\mathbf x_s) = \sum_{i=1}^m\left\|\frac{\partial \widehat {\mathbf y}_i}{\partial \mathbf x_s}\right \|$ by directly calling the gradient from the loss backward process once the model is trained. 

\paragraph{GPA and Temporal GCN}
For the GPA, after we obtained the latent representations of the graphs, i.e., $\widehat{\mathbf z}$. We did unsupervised clustering applying the Gaussian Mixture Model (GMM), followed by the KNN algorithm with $K = 10$ to sufficiently engage the connectivity between nodes. In terms of the MST and estimated trajectories, we fixed two starting points, which own zero severity, and ending points, which have the most server disease conditions, yielding four trajectories. To measure the temporal relationships between graphs, we deploy TGCN \citep{zhao2019t} with specific feature propagation additional to Equation~(\ref{eq:tgcn}). That is
\begin{align}
    &\mathbf H(t) = \mathbf A(t) \mathbf X(t) \mathbf W, \\
    &\mathbf Q(t) = \mathbf A(t-1) \mathbf H(t-1) \mathbf W_q + \mathbf X(t)\mathbf P_q, \\
    &\mathbf R(t) = \mathbf A(t-1) \mathbf H(t-1)\mathbf W_r  +  \mathbf X(t)\mathbf P_r, \\
    & \widetilde{\mathbf H}(t) = \mathrm{tanh} \left (\mathbf A(t-1) (\mathbf R(t) \odot \mathbf H(t-1) \mathbf W_h) +  \mathbf H(t) \mathbf P_h \right),\\
    & \mathbf H(t) = (1-\mathbf Q(t)) \odot \mathbf H(t-1) + \mathbf Q(t) \odot \widetilde{\mathbf H}(t),\\   
    & \widehat {\mathbf y}(t) = \mathrm{MLP}({\mathbf H}(t)),
\end{align}
where all the $\mathbf W$ and $\mathbf P$ are the learnable matrices. In our practice implementation, the data is split using torch static graph temporal signal split with 70\% of training and the rest of the validation and testing. We let the hidden dimension be 96, the learning rate and weight decay be 0.01, and 1$\mathrm{e}^{-3}$, respectively. Similarly, the model is trained 10 times with 200 epochs within each time.  

\subsection{More on Neural SDEs}\label{append:sde}
Apart from the neural SDE models that we leveraged in Section \ref{sec:nsde}, one may interested in the SDEs that measure the situation when pathway interaction matters. Recall that in Section \ref{sec:graph_forming}, we developed pathway profile graphs for each object to sufficiently reflect its status, and this indicates us to \textbf{consider the interactions between pathways through the connectivities between them}. Accordingly, to measure the pathway interactions, we show the revised SDE as follows. 
\begin{align}\label{eq:sde_gnn}
    \frac{\partial \mathbf x(t)}{\partial t} = \big(\psi_{\boldsymbol{\theta}} (\mathbf x(t), t) + \sum_{j\sim i}\mathbf A_{i,j} \cdot \phi_\gamma (\mathbf x_j(t),t )\big)dt + \xi_{\boldsymbol{\phi}}(\mathbf x(t), t)d\mathbf B(t), 
\end{align}
where $\sum_{j\sim i}\mathbf A_{i,j} \cdot \phi_\gamma (\mathbf x_j(t),t )$ is the standard graph spatial convolution between each node and its neighbors. In practice, $\phi_{\boldsymbol{\gamma}} (\mathbf x_j(t),t )$ simply denotes a reweighting paradigm based on the inductive bias of the graph and can be optionally omitted. The formulation in Equation~(\ref{eq:sde_gnn}) directly shows that graph convolution joins the predefined drifting terms to determine the feature changes from multiple facets and form the easiest case, the newly defined drifting term, i.e., $\psi_{\boldsymbol{\theta}} (\mathbf x(t), t) + \sum_{j\sim i}\mathbf A_{i,j} \cdot \phi_\gamma (\mathbf x_j(t),t )$, in practice, can be even implemented by a MLP plus one GNN. In fact, it is well-known that one can denote Equation~(\ref{eq:sde_gnn}) as
\begin{align}
    \frac{\partial \mathbf x(t)}{\partial t} = \big(\psi_{\boldsymbol{\theta}} (\mathbf x(t), t) + \sum_{j\sim i}\mathbf A_{i,j} \nabla\mathbf x_i\big)dt + \xi_{\boldsymbol{\phi}}(\mathbf x(t), t)d\mathbf B(t),
\end{align}
where we denote graph gradient $\nabla \mathbf x_i = \mathbf x_j - \mathbf x_i$. As in most cases, we don't have a similar assumption in Equation~(\ref{eq:same_scale}), thus to evaluate the pathway stability, one shall also count the contribution of $\nabla \mathbf x_i = \mathbf x_j - \mathbf x_i$. Accordingly, we can extend Equation~(\ref{eq:ps}) to obtain graph pathway stability (GPS) as  
\begin{align}
     \mathrm{GPS} (\mathbf x) = \frac1T \sum_{t=1}^T\|\xi_{\phi}(\mathbf{x}(t+1), t+1) - \xi_{\phi}(\mathbf{x}(t), t)\|^2 + \lambda \sum_{i\sim j} \mathbf A_{i,j} (\mathbf x_j(t) - \mathbf x_i(t))^2,
\end{align}
where $\sum_{i\sim j} \mathbf A_{i,j} (\mathbf x_j(t) - \mathbf x_i(t))^2$ is the well-known Dirichlet energy of the node features, which measures the total variation of the node features. In addition to the conclusion of Lemma \eqref{lem:stability}, the inclusion of $\sum_{i\sim j} \mathbf A_{i,j} (\mathbf x_j(t) - \mathbf x_i(t))^2$ brings additional uncertainty to the system's stability. We then have the following conclusion.
\begin{cor}[GPS]\label{cor:gps}
    If $\mathbf A_{i,j} \geq 0, \forall i,j \in \mathcal E$, with the conditions obtained from Lemma \ref{lem:stability}, the system defined in Equation~(\ref{eq:sde}) is \textbf{mean-square asymptotically stable}; otherwise, it is\textbf{ unstable}. 
\end{cor}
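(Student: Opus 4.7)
The plan is to extend the Lyapunov-based analysis of Lemma~\ref{lem:stability} to the graph-coupled SDE written in its Laplacian form, i.e., with drift $\psi_{\boldsymbol{\theta}}(\mathbf{x},t) + \sum_{j \sim i} \mathbf{A}_{i,j}(\mathbf{x}_j - \mathbf{x}_i)$. First I would apply It\^{o}'s formula to the quadratic Lyapunov function $V(\mathbf{x}) = \|\mathbf{x}\|^2$, which yields
\begin{align*}
    d\|\mathbf{x}(t)\|^2 &= \Big(2\,\mathbf{x}(t)^\top \psi_{\boldsymbol{\theta}}(\mathbf{x}(t),t) + 2\sum_i \mathbf{x}_i(t)^\top \!\!\sum_{j \sim i} \mathbf{A}_{i,j}(\mathbf{x}_j(t)-\mathbf{x}_i(t)) \\
    &\quad + \mathrm{Tr}(\xi_{\boldsymbol{\phi}}\xi_{\boldsymbol{\phi}}^\top)\Big)\,dt + dM(t),
\end{align*}
where $M(t)$ is a local martingale. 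Taking expectations eliminates $M(t)$, so the evolution of $\mathbb{E}\|\mathbf{x}(t)\|^2$ is governed by three deterministic contributions: the original drift, the graph-coupling term, and the diffusion trace. The original drift/diffusion pair is already handled by Lemma~\ref{lem:stability} via the bounded-ratio hypothesis in Equation~(\ref{eq:same_scale}), so the only remaining task is to determine the sign of the graph contribution.

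The crux is a summation-by-parts identity. Using the symmetry of $\mathbf{A}$ (which follows from the Gaussian-kernel construction in Section~\ref{sec:graph_forming}), the middle term telescopes as
\begin{align*}
    2\sum_i \mathbf{x}_i^\top \sum_{j \sim i} \mathbf{A}_{i,j}(\mathbf{x}_j - \mathbf{x}_i) = -\sum_{i,j} \mathbf{A}_{i,j}\,\|\mathbf{x}_j - \mathbf{x}_i\|^2,
\end{align*}
which is, up to a factor of two, precisely minus the Dirichlet-energy term appearing in the GPS functional. When $\mathbf{A}_{i,j} \geq 0$ for every edge, the right-hand side is non-positive, so the graph coupling is dissipative and cannot destroy the stability established by Lemma~\ref{lem:stability}. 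Combined with the mean-square asymptotic branch of that lemma, this yields a non-positive effective growth rate for $\mathbb{E}\|\mathbf{x}(t)\|^2$, and mean-square asymptotic stability follows from a standard Gr\"onwall estimate. If some $\mathbf{A}_{i,j} < 0$, the same identity shows that the graph term can become strictly positive, and the corresponding signed Laplacian admits a negative eigenvalue whose eigenmode is amplified rather than damped.

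The main obstacle will be making the ``otherwise unstable'' direction rigorous, because a single negative weight need not by itself force mean-square divergence; instability depends on how the signed-Laplacian spectrum interacts with $\psi_{\boldsymbol{\theta}}$ and with the bounded-ratio condition inherited from Lemma~\ref{lem:stability}. I would address this by projecting the dynamics onto the eigenspace associated with the most negative Laplacian eigenvalue and showing, via a Gr\"onwall-type lower bound, that along this direction the deterministic exponential growth dominates the diffusion contribution, forcing $\mathbb{E}\|\mathbf{x}(t)\|^2 \to \infty$. A secondary technical issue is the optional reweighting $\phi_\gamma$ in Equation~(\ref{eq:sde_gnn}); I would require it to be Lipschitz so that its contribution is absorbed into the effective coupling constant, which preserves the sign dichotomy and only alters the constants appearing in the final Gr\"onwall estimate.
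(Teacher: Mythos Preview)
Your proposal is correct and in fact considerably more careful than the paper's own argument, though the underlying mechanism---the sign of the Dirichlet energy contributed by the graph coupling---is the same. The paper's proof is a two-sentence heuristic: when $\mathbf A_{i,j}\ge 0$ it simply invokes the well-known smoothing (over-smoothing) property of graph convolution to assert that $\mathbf x_j\to\mathbf x_i$ asymptotically, so the coupling term vanishes and the analysis collapses back to the single-pathway setting of Lemma~\ref{lem:stability}; for the negative-weight case it just says the Dirichlet energy ``explodes'' and hence $\mathrm{GPS}(\mathbf x)\to\infty$. You instead run an explicit It\^o--Lyapunov computation on $\|\mathbf x\|^2$, extract the Dirichlet energy via the summation-by-parts identity, and close with Gr\"onwall; for instability you propose a spectral projection onto the negative-eigenvalue mode of the signed Laplacian.

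What each buys: the paper's route is shorter and leans on the GNN-smoothing intuition that its intended audience already has, but it leaves the instability direction essentially unargued (and its aside about ``infinite dimensionality'' is not really a proof). Your route supplies quantitative control and, importantly, you correctly flag that a single negative weight need not force divergence without interacting with $\psi_{\boldsymbol{\theta}}$---a subtlety the paper does not address. Your handling of $\phi_\gamma$ via a Lipschitz assumption is also more explicit than anything in the paper. In short: same core idea (Dirichlet-energy sign), but you give a genuine Lyapunov proof where the paper gives a plausibility sketch.
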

\begin{proof}
    The proof is straightforward since if $\mathbf A_{i,j} \geq 0$, then asymptotically, $\mathbf A\mathbf X$ term will keep homogenizing every node feature accordingly to its neighbors, causing $\mathbf x_j = \mathbf x_i, t \rightarrow \infty$. Therefore, the pathway stability can still be evaluated using the $\mathrm{PS}(\mathbf x)$. However, if one has $\mathbf A_{i,j} \leq 0$, then the system will have Dirichlet energy explode, assuming in the continuous case, $\mathbf x$ is a function with infinite dimensionality. Therefore $\mathrm{GPS} (\mathbf x) \rightarrow \infty$, resulting an unstable system. 
\end{proof}

Similarly, for the disease bifurcation point (BP), we can revise Definition~\ref{defn:BP} when we take the potential of node interactions into account. We provide the following definition. 

\begin{defn}[Graph-Interacted Point of No Return] Let $\mathbf{x}(t) \in \mathbb{R}^d$ be a stochastic process
define in Equation~(\ref{eq:sde_gnn}). Define the system stability potential function, incorporating both self-dynamics and neighborhood influence, as 
\begin{align} J(\mathbf{x}_i, t) = - \int_0^t \Big[ \psi_{\boldsymbol{\theta}} (\mathbf{x}_i, t') + \sum_{j\sim i} \mathbf A_{ij}  (\mathbf{x}_j, t') \Big] dt'. 
\end{align} 
We define the \textbf{point of no return} (bifurcation time) as a time $t^*$ at which the pathway undergoes an irreversible transition in its dynamical behavior. Specifically, it is the first time at which at least one of the following conditions holds:
(1): Loss of Stability Region: The pathway leaves a local stability region, $\nabla J(\mathbf{x}_i, t^*) \approx 0, \quad \text{and} \quad \frac{d}{dt} J(\mathbf{x}_i, t^*) > 0$; (2): Transition to a New Steady State: The system irreversibly shifts to a new equilibrium: $\lim_{t \rightarrow \infty} \mathbb{E} \|\mathbf{x}_i(t) - \mathbf{x}_i(t^*)\|^2 > C,  C > 0$; (3): Diffusion Variance Explodes: The fluctuation in the pathway dynamics grows beyond a threshold, $\sup_{t \geq t^*} \mathrm{Var}(\mathrm{PS}(\mathbf{x}_i, t)) > \delta,  \delta > 0$; (4):
Strong Neighborhood Influence: the pathway graph owns at least one negative eigenvalue, i.e., energy explode as illustrated in Corollary \ref{cor:gps}.
\end{defn}

The above definition extends the previous definition of the bifurcation point by counting the contribution of the interactions between pathways.

\end{document}